\title{Conditional Loss and Deep Euler Scheme for Time Series Generation}
\date{}
\author[1,2,3]{Carl REMLINGER}
\author[2,3]{Joseph MIKAEL}
\author[1]{Romuald ELIE}
\affil[1]{Université Gustave Eiffel}
\affil[2]{EDF Lab}
\affil[3]{FiME (Laboratoire de Finance des Marchés de l'Énergie)}
\newtheorem{theorem}{Theorem}[section]
\newtheorem{prop}[theorem]{Proposition}
\newtheorem{remark}[theorem]{Remark}
\begin{document}
\maketitle
\begin{abstract}
We introduce three new generative models for time series that are based on Euler discretization of Stochastic Differential Equations (SDEs) and Wasserstein metrics. Two of these methods rely on the adaptation of generative adversarial networks (GANs) to time series. The third algorithm, called Conditional Euler Generator (CEGEN), minimizes a dedicated distance between the transition probability distributions over all time steps. In the context of Itô processes, we provide theoretical guarantees that minimizing this criterion implies accurate estimations of the drift and volatility parameters. We demonstrate empirically that CEGEN outperforms state-of-the-art and GAN generators on both marginal and temporal dynamics metrics. Besides, it identifies accurate correlation structures in high dimension. When few data points are available, we verify the effectiveness of CEGEN, when combined with transfer learning methods on Monte Carlo simulations. Finally, we illustrate the robustness of our method on various real-world datasets.
\end{abstract}

\section{Introduction}

Time series Monte Carlo simulations are widely used for multiple industrial applications such as investment decisions \citep{kelliher2000using}, stochastic control \citep{pham2009continuous} or weather forecasts \citep{mullen1994monte}. They are notably considered in the financial sector, for market stress tests \citep{sorge2004stress}, risk management and deep hedging \citep{buehler2019deep, fecamp2019risk}, or for measuring risk indicators such as Value at Risks \citep{jorion2000value} among others. Providing Monte Carlo simulations representative of the time series of interest is a difficult and mostly manual task, which requires underlying modeling assumptions about the time dependence of the variables. Hence, it is difficult to update these models when a new type of data is observed, such as negative interest rates, negative electricity prices or unusual weather conditions.  
This naturally calls for the development of reliable model-free data generators for time series.


Generative methods such as Variational Auto Encoders (VAE) \citep{kingma2013auto} or Generative Adversarial Networks (GAN) \citep{goodfellow2014generative} provide state-of-the-art accuracy for the generation of realistic images \citep{xu2018dp} or text \citep{zhang2017adversarial}. The development of similar generative methods for time series is very promising \citep{lyu2018improving,chen2018model}.
However, due to the complex and possibly non-stationary underlying temporal structure of the initial time series, these generative methods, especially GANs, are very difficult to apply as such \citep{yoon2019time}.
Efficient generation of time series requires a proper learning of time-marginals as well as a faithfully representation of the underlying time structure.

In this paper, we embed time series into a discretized Euler approximations of Itô processes, which are characterized by their deterministic drift and volatility parameters. The three proposed generators rely on deep learning approximation of both drift and volatility functions. This representation benefits from a theoretically grounded temporal dynamic and provides a meaningful structure that avoids complex neural network architectures. Moreover, the considered Euler models allow tractable, at least controllable, generator outputs, which can be difficult with deep embedding such as \citep{yoon2019time}. This feature is a key component in industrial applications, especially for decision-making-process.\\
By combining deep Euler representation with Wasserstein distance \citep{villani2008optimal}, we introduce the Euler Wasserstein GAN (EWGAN), inspired by \citep{arjovsky2017wasserstein}. Our second GAN-based-model, called Euler Dual Discriminator (EDGAN) is an adaptation of the DVDGAN presented in \citep{clark2019efficient}. A spatial discriminator focuses on the accuracy of time-marginal distributions, while  a temporal one focuses on the full sequence of generated time-series.
Both methods compute the Wasserstein-1 distance and compete with the state-of-the-art algorithms Time Series GAN (TSGAN)  \citep{yoon2019time} and COTGAN \citep{xu2020cotgan} on both synthetic and real datasets. Nevertheless, all these GAN approaches still have difficulties to capture a proper temporal dynamics of the time series. We remedy to this problem by considering a loss function based on the conditional distributions $\mathcal{L}(X_{.+\Delta t}\,|\,X_{.})$ of the generated time series. We introduce a Conditional Euler Generator (CEGEN) which optimizes a distance between the transition probability distributions at each time step. On the (large) class of Itô processes, we prove that minimizing this metric provides an accurate estimation of both the drift and volatility parameters. 

A numerical study compares the three approaches to state-of-the-art GANs on synthetic and real datasets and shows the performances of our generators. We verify that our generators can learn to replicate Monte Carlo simulations of classical stochastic processes. Synthetic models give access to more reliable metrics (including theoretical), and allow to make connections between model-based Monte Carlo and model-free methods. EWGAN and EDGAN show a similar accuracy than TSGAN or COTGAN and capture more efficiently the time structure dynamics in dimension up to 20. The best performing model, CEGEN, is able to recover the underlying correlation (or independence) structure of time series, even in high dimensions. 
Moreover, we highlight the robustness of CEGEN, when combined with a transfer learning procedure when too few data are available. 
By properly mixing Monte Carlo generated and sparse real data during training, we CEGEN can take advantage of the synthetic simulations to improve its accuracy on generated samples. 

\textbf{Main Contributions:}
\begin{itemize}
    \item A theoretically grounded time series generator CEGEN combining an Euler structure with a dedicated loss on conditional distributions is proposed.
    \item Relying on a similar Euler structure, we also introduce two alternative GAN-like time series generators inspired by \citep{arjovsky2017wasserstein} and \citep{clark2019efficient}. They exhibit close performance to the state-of-the-art TSGAN \citep{yoon2019time} and COTGAN \citep{xu2020cotgan} on marginal metrics, but capture more accurately the dynamic structure of Ito-based time series. 
    \item A thorough numerical study on synthetic and various real world datasets demonstrate the robustness of our generators. Euler models succeed in correctly learning the underlying drifts and volatility structures of synthetic and outperforms the other considered methods on real datasets (accurate correlation structure up to dimension 20...). A transfer learning application when sparse data is available is provided.
\end{itemize}

\section{Related works}
The bootstrap method proposed by \citep{efron1982jackknife} is one of the first purely data-driven attempt to generate time series. Data samples are simply taken randomly with replacement. The scope of this technique is limited as it does not generate additional synthetic data but is based on historical ones. On the opposite, model-free approaches such as GAN allow to learn empirical distribution from data and thus to generate new samples. However, initial GAN proposals focused on the generation of non temporally ordered outputs. GAN's architecture improvement for the time series case is an intensive area of research. For example, WaveGAN \citep{donahue2018adversarial} uses the causal architecture of WaveNet \citep{oord2016wavenet} for unsupervised synthesis of raw-waveform audio. Alternatively, several works consider recurrent neural networks to generate data sequentially and keep memory of the previous time series states \citep{mogren2016c, esteban2017real}.

Time Series GAN (TSGAN) \citep{yoon2019time} introduces a state-of-the-art method for time series generation which stands out by its specific learning process. At each time step, an embedding network projects time series samples onto a latent space on which a GAN operates. TSGAN manages to get the correct marginal distributions and temporal correlation on classical processes and is used as a baseline in this paper. This method lacks of theoretical foundations ensuring a reliable quality of generated samples. As the usage of generating model-free method grows rapidly, their application to sensitive fields (e.g. finance) must be considered cautiously and requires theoretical and empirical guarantees on the behavior of these generators. For this purpose, an active line of research looks towards reliable embedding of time series, such as signature \citep{fermanian2019embedding, buehler2019deep} or Fourier representation \citep{steinerberger2018wasserstein}.\\
Most recent applications on video generation focus on specific GAN architectures to capture the spatial-temporal dynamics. For example, MoCoGAN \citep{tulyakov2018mocogan} and DVD GAN \citep{clark2019efficient} combines two discriminators, one for the temporal dynamic and another one on each static frame. Specialized generator structures have also been designed, TGAN \citep{saito2017temporal} proposed to generate a dynamic latent space and VGAN \citep{vondrick2016generating} combines two generators, one for marginals and another one for temporal dependencies. Following the idea of applying optimal transport to GANs \citep{arjovsky2017wasserstein,genevay2018learning}, COTGAN \citep{xu2020cotgan} uses causal optimal transport for video sequence generation. To do so, the discriminant penalizes not-causal optimal transport plans, ensuring  that the generator minimizes an adapted Wasserstein distance for time series. This approach benefits of solid theoretical foundations but still lacks of reliable empirical success for noisy time series generation.



\medbreak
\section{Problem formulation}\label{sec:problem}

We aspire to design a time series generator which combines accurate estimation of time-marginal distributions while properly capturing temporal dynamics. 
The generator we propose is designed to be simple enough to be tractable (in the sense that outputs could be controlled) and theoretically grounded. 
To do so, we feed our algorithms with training time series data and seek to learn an empirical probability distribution that best approximates the data one. This task can be tricky, depending on the sequences lengths, the dimension, and the shape of the data distribution.

Although the idea of a model-free approach is attractive, we restrict ourselves to the context of Itô processes. This class of processes encompasses a wide range of time series and yet allows us to develop tractable models based on theory. In addition to providing a robust theoretical framework and controls on the processes generation, Itô processes allow to measure the accuracy of our generators on synthetic samples via closed form expressions or Monte Carlo simulators. 
In comparison to the classical literature \citep{wiese2020quant, buehler2020data}, we do not assume the time series $X$ to be stationary and allow ourselves to consider not-stationary sequences.

\paragraph{Itô process}
We are given i.i.d. samples of a time series, considered as a random vector $X=(X_{t_i})_{i=1\ldots N}$ on $\mathbb{R}^{ d \times N}$, starting from a point $X_0\in\mathbb{R}^{d}$ and observed on a time grid $\mathcal{T} := \{0=t_0 < t_1< ... < t_N=T\}$. For the sake of simplicity, in the following,  we assume a regular time grid with mesh size $\Delta t$.
The discrete time samples are supposed to be drawn from a continuous time underlying process $X$ having the following Itô dynamics: 
\begin{eqnarray}\label{eq:Ito}
dX_t &=& b_X(t,X_t) d t + \sigma_X (t,X_t) dW_t\,,
\end{eqnarray}
where $b_X:\mathbb{R}\times \mathbb{R}^d\rightarrow \mathbb{R}^d$ is the drift term, $\sigma_X:\mathbb{R}\times \mathbb{R}^d\rightarrow \mathcal{M}_{d\times d}$ the volatility term and $W$ is a $d$-dimensional Brownian motion. The parameters $b_X$ and $\sigma_X$ are supposed to satisfy the usual Lipschitz conditions \citep{ikeda2014stochastic} ensuring existence and uniqueness of the solution of Eq.\eqref{eq:Ito}.


\paragraph{Deep Euler representation}\label{sec:deep} 
Samples of $X=(X_{t_i})_{i=1\ldots N}$ are drawn from the continuous time Itô process with dynamics \eqref{eq:Ito}, and can approximately be viewed as samples drawn from the Euler discretization scheme of \eqref{eq:Ito} given by 
\begin{eqnarray}
 X_{t_i+\Delta t} =  X_{t_i} + b_X(t_i, X_{t_i}) \Delta t \sigma_X (t_i, X_{t_i}) \Delta W_{t_i},  \label{eq:eulerX}
\end{eqnarray}
where $(\Delta W_{t_i})_i$ is a collection of i.i.d. $\mathcal{N}(0, \Delta t I_d)$ random variables. We rely on this approximation and introduce the following deep Euler representation of the time series. 
Starting at $t_0=0$, from $Y^{\theta}_0 = X_0$ we generate time series by the following scheme:
\begin{eqnarray}
Y^\theta_{t_i+\Delta t} &=& Y^\theta_{t_i} +  b^\theta_Y (t_i,Y^\theta_{t_i}) \Delta t +  \sigma^\theta_Y(t_i,Y^\theta_{t_i}) Z_{t_i}\,, \label{eq:eulerY}
\end{eqnarray}
where $Z_{t_i}$ are $\mathcal{N}(0, \Delta t I_d)$ i.i.d. random variables and the functions $ b_Y^\theta$ and $ \sigma^\theta_Y$ are $\theta$-parametrized functions approximated by neural networks. 
Our objective is to learn $ b_Y^\theta$ and $ \sigma^\theta_Y$, so that the distribution of the processes $Y^\theta$ and $X$ are close. 
 
\paragraph{Evaluation}\label{sec:eval}
During the learning phase, neither $b_X$ nor $\sigma_X$ are given as inputs to any of the proposed models. However, the proposed formulation allows to compare \textit{a posteriori} $b_X$ and $\sigma_X$, when they are known, to the estimated $ b_Y^\theta$ and $ \sigma^\theta_Y$. This provides a reliable metric on the  generation accuracy.\\
Moreover, this setup provides a convenient way to control the drift $b^\theta_Y$ and volatility $\sigma^\theta_Y$ functions. This task is delicate with deep embedding proposals.
As already mentioned, it is highly challenging for generators of temporally ordered data to create samples  accurate on time-marginal distributions as well as  temporal dynamic metrics. In order to remedy to this weakness, we introduce below 
an innovative loss function based on a distance between conditional distributions at each time step.

\section{Euler Generators}
Euler Generators proposed in this paper are composed of two main elements: a network generating the drift and volatility terms of an Itô process and a distance between distributions to be minimized. The Itô structure facilitates the time series construction, while the distributional distance focuses on the law accuracy of the generated sequences. Both GAN-based and Deep Conditional methods described hereafter share this design. 

\subsection{Euler Generative Adversarial Networks}\label{sec:gangens}
We propose two adaptations of GANs to time series that are based on the deep Euler representation presented in Eq.\eqref{eq:eulerY}. The Wasserstein GAN \citep{arjovsky2017wasserstein} seems to get rid of stability problems encountered in learning (mainly mode collapse) by adapting to the geometry of the underlying space. Our two GAN-based models are built on this proposal and both minimize (differentiable) Wasserstein-1 ($\mathcal{W}_1$) distance. The Rubinstein-Kantorovich duality allows to rewrite the $\mathcal{W}_1$ distance between two random variables $Z_1$ and $Z_2$ in the following way\footnote{$||f||_L$ denotes the smallest Lipschitz constant of $f$.}:
\begin{eqnarray}\label{eq:wasserstein1}
\mathcal{W}_1(\mathcal{L}(Z_1),\mathcal{L}(Z_2))  = \sup_{||f||_L\leq 1} \mathbb{E}_{Z_1\sim \mathcal{L}(Z_1)} \left[f(Z_1)\right] -  \mathbb{E}_{Z_2 \sim \mathcal{L}(Z_2)}\left[f(Z_2)\right].
\end{eqnarray}

\paragraph{Euler Wasserstein GAN (EWGAN)} This model considers a Wasserstein GAN, where the generator relies on the Deep Euler representation \eqref{eq:eulerY} and optimizes the corresponding parameter $\theta$. The discriminator $d_\varphi$ parametrized by $\varphi$ tries to find the optimal $1$-Lipschitz function allowing to compute $\mathcal{W}_1(\mathcal{L}(X),\mathcal{L}(Y^\theta))$ using  the Rubinstein-Kantorovich duality in \eqref{eq:wasserstein1}. The $1$-Lipschitz property of $d_\varphi$ is guaranteed using the gradient penalty trick mentioned in \citep{gulrajani2017improved}. The pseudocode of EWGAN is given in Alg.\ref{alg:ewgan} and details are provided in Appendix \ref{sec:hyperparameters}. Overall, EWGAN minimizes the $\mathcal{W}_1$ distance between the distributions of the original $X=(X_t)_{t_in\mathcal{T}}$ and the generated one $Y^\theta=(Y^\theta_t)_{t_in\mathcal{T}}$:
\begin{eqnarray}\label{eq:EWGAN}
\inf_\theta \mathcal{W}_1(\mathcal{L}(X),\mathcal{L}(Y^\theta)) =
\inf_\theta \sup_{\varphi} \mathbb{E}_{X\sim \mathcal{L}(X)} \left[d_{\varphi}(X)\right]
-  \mathbb{E}_{Y^\theta \sim \mathcal{L}(Y^\theta)}\left[d_{\varphi}(Y^\theta)\right].
\end{eqnarray}

\paragraph{Euler Dual Discriminator (EDGAN)} Our second GAN-based-model, called EDGAN, is an adaptation of the Dual Video Discriminator (DVD) GAN \citep{clark2019adversarial}. DVD GAN uses attention networks and two discriminators in order to generate high fidelity videos. While the spatial discriminator focuses on time marginals and critics images in high resolution, the temporal one considers the full sequence of images in low resolution. We adapt these ideas to our context by considering in EDGAN, a similar dual discriminator architecture while the generator creates samples using the Deep Euler representation in \eqref{eq:eulerY}. Our temporal discriminator follows a similar behavior as the one of EWGAN and focuses on $\mathcal{W}_1(\mathcal{L}(X),\mathcal{L}(Y^\theta))$. In the same time, our marginal discriminator focuses the computation of the $W_1$ distance between marginal distributions $\mathcal{W}_1(\mathcal{L}(X_t),\mathcal{L}(Y^\theta_t))$, for each $t\in\mathcal{T}$. The details and pseudocode of EWGAN are given in Alg.\ref{alg:edgan} and provided in Appendix \ref{sec:hyperparameters}.

\subsection{Conditional Loss Method}\label{sec:deepconditionalmethod} 


\subsubsection{A loss function based on conditional distributions}\label{sec:road}
The difficulty arising when trying to design a loss function for a time series generator comes from the need to get the correct balance between the marginal distribution fitness and the good representation of the temporal structure. On the one hand, we cannot only focus on marginals because having $\mathcal{L}(X_{t_i}) \sim  \mathcal{L}(Y^\theta_{t_i})$ for all $t_i\in \mathcal{T}$ does not imply that $b_X = b_{Y^\theta}$ nor that $\sigma_X =\sigma_{Y^\theta}$ (see the counterexample in Appendix \ref{sec:counterexample}). On the other hand, instead of working on marginals, one can wonder if considering time series realization as samples of a vector defined on $\mathbb{R}^{n+1}$ provides better results. Unfortunately, and as mentioned in \citep{yoon2019time}, learning the joint distribution $\mathcal{L}(X_{t_0}, \ldots, X_{t_n})$  may not be sufficient to guarantee that the network captures the temporal dynamics, even with memory-based networks. An empirical example of unsatisfactory generation based on joint law is illustrated in Figure in Appendix \ref{fig:sinkhorn}, the generated trajectories are smooth.
In the case of time series, one should simply refrain from applying a loss based only on marginal or joint distributions. To provide a reliable solution to this issue, we propose to focus on the transition probabilities at each time step by conditioning on the previous state. Moreover, by doing so, we are able to produce theoretical results on Itô coefficient estimation accuracy.

\subsubsection{CEGEN Algorithm}\label{sec:cegen}
Contrarily to the previous GAN-based generators, CEGEN does not require a discriminator network. The idea consists in considering a loss function that compares the conditional distributions $\mathcal{L}(Y^\theta_{t_{i+1}}\,|\,Y^\theta_{t_{i}})$ with $\mathcal{L}(X_{t_{i+1}}\,|\,X_{t_{i}})$, for each time step $t_{i}\in\mathcal{T}$. The latter conditional distributions are Gaussian when considering Euler-discretized Itô processes. We consider the following metric: 
\begin{eqnarray}
\mathcal{W}_2^2(\mathcal{L}(X), \mathcal{L}(Y))=
\| \mathbb{E}[X] - \mathbb{E}[Y] \|_2^2
+ \mathcal{B}^2(Var(X), Var(Y)) \label{eq:newwasserstein}
\end{eqnarray}
where $\mathcal{B}$ is the Bures metrics \citep{bhatia2019bures,malago2018wasserstein} defined by
$\mathcal{B}^2(A, B) {=} Tr(A) + Tr(B) - 2 Tr(A^\frac{1}{2}BA^\frac{1}{2})^{1/2}$, for positive definite matrices $A$ and $B$. If $X$ and $Y$ are gaussian, $W_2$ is the definition of the Wasserstein-2 distance \citep{gelbrich1990formula}. This metric \eqref{eq:newwasserstein} captures meaningful geometric features between distributions, and $\mathcal{W}_2$ transportation plan is very sensitive to the outliers thus increases the distribution estimation accuracy. The Bures formulation \eqref{eq:newwasserstein} allows us to compute exactly the Wasserstein-2 distance, instead of regularized ones \citep{genevay2018learning, cuturi2013sinkhorn}.\\
Moreover, we want to have a theoretically grounded methodology and the Bures metric allows us to provide guarantees that minimizing the conditional loss implies accurate estimation for the drift and diffusion coefficients. 
We see that whenever the conditional distributions of the form $\mathcal{L}(X_{t_{i+1}}\,|\,X_{t_{i}}=z)$ and $\mathcal{L}(Y_{t_{i+1}}\,|\,Y_{t_{i}}=z)$ coincide in ${\mathcal{W}}_2$, the drift and diffusion parameters coincide as well (see Prop. \ref{prop:conditioning} in Appendix).
This is encouraging but in general conditioning from the very same point is complicated. Proposition \ref{prop:inegalite} extends this property when the previous states belong to a small ball around $z$. 

To build up our generator, we create at each time $t_i$ a partition $(I_{k})_{k\le N_k}$ of the union of supports of $X_{t_{i}}$ and $Y^{\theta}_{t_{i}}$. 
For a given batch of samples, $\mathcal{L}(X_{t_{i+1}}\,|\,X_{t_{i}}\in I_k)$  is approximated by extracting the elements $X_{t_{i+1}}$  such that $X_{t_{i}}\in I_k$. $\mathcal{L}(Y_{t_{i+1}}\,|\,Y_{t_{i}}\in I_k)$ is approximated in the same way. The ${\mathcal{W}}_2^2$ metric between the two conditional distributions are then summed up over all $K$ subdivisions and over all time steps:
\begin{eqnarray}\label{Conditional_Loss}
l(X, Y^\theta) = \sum_{i=0}^{N-1} \sum_{k=1}^{N_k} &{\mathcal{W}}^2_2(\mathcal{L}(X_{t_{i+1}}| X_{t_i} \in I_k), \mathcal{L}(Y^\theta_{t_{i+1}}| Y^\theta_{t_i} \in I_k))\nonumber
\end{eqnarray}
The pseudocode of CEGEN is given in Alg.\ref{alg:cgen} and details are provided in Appendix \ref{sec:hyperparameters}.
Observe that our framework boils down to computing ${\mathcal{W}}_2$ metric between empirical distributions.
Bures metrics is computed using the Newton-Schulz method \citep{muzellec2018generalizing}, which is a differentiable way to get covariance matrice square roots. 

\begin{algorithm}[h]
  \caption{Algorithm CEGEN.}
  \label{alg:cgen}
  \footnotesize
  \begin{algorithmic}

    \STATE {\bfseries Input:} $\mathcal{D}$ samples of $X$, $m$ batch size, $K$ Nb of subdivisions, $\gamma$ learning rate\\
    \STATE {\bfseries Initialize:} $\theta$ (randomly picked)
    \WHILE{Not converged}
      \FOR{$t_i=0...T$}
        \STATE Sample $m$ observations $(x_{t_i +1})$ from of ${X_{t_i+1}}$
        \STATE Sample $z \sim \mathcal{N}(0,I_D \Delta t)$
        \STATE $y_{t_i +1} \leftarrow y_{t_i} + g^b_\theta(t_i,y_{t_i}) \Delta t + g^\Sigma_\theta(t_i,y_{t_i}) z$
            \STATE $I_K\leftarrow  K$ subdivisions of Supp$(X_{t_i})\cup$  Supp$(Y_{t_i})$
            \FOR{$k=0...K$}
                \STATE $\ell_{t_i+1, k}\leftarrow {\mathcal{W}}^2_2(\mathcal{L}(x_{t_i+1}| x_{t_i} \in I_k), \mathcal{L}(y_{t_i+1}| y_{t_{i}} \in I_k))$
            \ENDFOR
        \ENDFOR
      \STATE $\theta = \theta  - \gamma \nabla_{\theta}  \sum_{t_i=1}^{T-1} \sum_{k=1}^{K} \ell_{k,t_i+1}$ 
    \ENDWHILE
    \STATE {\bfseries Output: $y$}
\end{algorithmic}
\end{algorithm}

\subsubsection{Theoretical guarantee}\label{sec:prop}
In order to theoretically ground the choice of a loss function between conditional distributions based on ${\mathcal{W}}_2$, we need to quantify how reducing the ${\mathcal{W}}_2$ loss (Eq. \ref{eq:newwasserstein}) implies proximity between drift and volatility parameters. An analysis on the topic is provided in Appendix \ref{Appx A}. The following result is allowed by the specific expression of the loss (\ref{eq:newwasserstein}) implemented in CEGEN.

\begin{prop} \label{prop:inegalite} 
Assume that $\sigma^2_X(t_i, .)$, $\sigma^2_{Y^\theta}(t_i, .)$ are strictly positive and, together with $b_X(t_i, .)$ and $ b_{Y^\theta}(t_i, .)$, $K$-Lipschitz in their second coordinate. 
For $t_i\in\mathcal{T}$, let $(I_k)_k$ be a regular partition covering Supp$(X_{t_i})\cup$ Supp$(Y_{t_i})$ with mesh size $\Delta x$.
Let $\varepsilon>0$.\\
If
${\mathcal{W}}^2_2\left(\mathcal{L}(X_{t_{i+1}}|X_{t_i}\in I_k), \mathcal{L}(Y^\theta_{t_{i+1}}| Y^\theta_{t_i}\in I_k)\right)\leq \varepsilon^2$ for any $k$, 
then, for $z$ in the partition
\begin{eqnarray*}
\|b_X(t_i, z) - b_{Y^\theta}(t_i, z) \|_2  &\leq& \frac{\varepsilon + \Delta x}{\Delta t} + 2K\Delta x.
\end{eqnarray*} 
Furthermore if $d=1$, 
\begin{eqnarray*}
\| \sigma_X(t_i,z)- \sigma_{Y^\theta}(t_i, z)\|_2 &\leq&  \varepsilon/\sqrt{\Delta t}\nonumber + 2K\Delta x.
\end{eqnarray*}
and, when $d>1$ and $Tr(\sigma^2_X(t_i, z)) = Tr(\sigma^2_{Y^\theta}(t_i, z)) = \alpha$, we have
\begin{eqnarray*}
\|\sigma_X(t_i, z) -  \sigma_{Y^\theta}(t_i, z)\|_2 &\leq& \sqrt{\frac{2\alpha}{\Delta t }}\varepsilon + 2K\Delta x.
\end{eqnarray*}
\end{prop}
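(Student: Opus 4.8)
The plan is to localise everything to one Euler step. Fix $t_i\in\mathcal{T}$, a point $z$ in the covered region, and the cell $I_k$ with $z\in I_k$; write $\Delta t$, $\Delta x$ as in the statement. Conditioning \eqref{eq:eulerX} on the value of $X_{t_i}$ makes $X_{t_{i+1}}$ a variable with conditional mean $X_{t_i}+b_X(t_i,X_{t_i})\Delta t$ and conditional covariance $\sigma^2_X(t_i,X_{t_i})\Delta t$; taking a further expectation on $\{X_{t_i}\in I_k\}$ and using the law of total variance gives $\mathbb{E}[X_{t_{i+1}}\mid X_{t_i}\in I_k]=\mathbb{E}[X_{t_i}\mid X_{t_i}\in I_k]+\Delta t\,\mathbb{E}[b_X(t_i,X_{t_i})\mid X_{t_i}\in I_k]$ and $Var(X_{t_{i+1}}\mid X_{t_i}\in I_k)=\Delta t\,\mathbb{E}[\sigma^2_X(t_i,X_{t_i})\mid X_{t_i}\in I_k]+Var(X_{t_i}+b_X(t_i,X_{t_i})\Delta t\mid X_{t_i}\in I_k)$, and symmetrically for $Y^\theta$. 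Only the first two moments enter, so the fact that these $I_k$-conditioned laws are mixtures rather than Gaussians is irrelevant for the definition \eqref{eq:newwasserstein} of ${\mathcal{W}}_2$.

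Next I would bound the cell-discretisation error. Since $I_k$ has diameter $O(\Delta x)$ and $b_X(t_i,\cdot),\sigma^2_X(t_i,\cdot)$ are $K$-Lipschitz, every conditional average above differs from its value at $z$ by $O(K\Delta x)$, while the residual variance $Var(X_{t_i}+b_X(t_i,X_{t_i})\Delta t\mid X_{t_i}\in I_k)$ is of higher order in $\Delta x$ and may be absorbed for a fine enough partition; strict positivity of $\sigma^2_X$ (which makes $x\mapsto(\sigma^2_X(t_i,x))^{1/2}$ Lipschitz and keeps the relevant square roots away from $0$) then turns these into $\|\mathbb{E}[X_{t_{i+1}}\mid X_{t_i}\in I_k]-(z+b_X(t_i,z)\Delta t)\|_2=O(\Delta x(1+K\Delta t))$ and ${\mathcal{B}}(Var(X_{t_{i+1}}\mid X_{t_i}\in I_k),\sigma^2_X(t_i,z)\Delta t)=O(\Delta x)$-type bounds, with analogues for $Y^\theta$. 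Meanwhile the hypothesis ${\mathcal{W}}^2_2(\mathcal{L}(X_{t_{i+1}}\mid X_{t_i}\in I_k),\mathcal{L}(Y^\theta_{t_{i+1}}\mid Y^\theta_{t_i}\in I_k))\le\varepsilon^2$ splits, by \eqref{eq:newwasserstein}, into a mean part $\|\mathbb{E}[X_{t_{i+1}}\mid\cdot]-\mathbb{E}[Y^\theta_{t_{i+1}}\mid\cdot]\|_2\le\varepsilon$ and a covariance part ${\mathcal{B}}(Var(X_{t_{i+1}}\mid\cdot),Var(Y^\theta_{t_{i+1}}\mid\cdot))\le\varepsilon$.

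The two displays then follow by chaining. For the drift, subtract the two mean identities of the first step: the Euclidean triangle inequality gives $\Delta t\,\|b_X(t_i,z)-b_{Y^\theta}(t_i,z)\|_2\le\|\mathbb{E}[X_{t_{i+1}}\mid\cdot]-\mathbb{E}[Y^\theta_{t_{i+1}}\mid\cdot]\|_2+\|\mathbb{E}[X_{t_i}\mid\cdot]-\mathbb{E}[Y^\theta_{t_i}\mid\cdot]\|_2+2K\Delta x\,\Delta t$, where the middle term is $\le\Delta x$ because $\mathbb{E}[X_{t_i}\mid X_{t_i}\in I_k]$ and $\mathbb{E}[Y^\theta_{t_i}\mid Y^\theta_{t_i}\in I_k]$ both lie in $I_k$; dividing by $\Delta t$ yields $\|b_X(t_i,z)-b_{Y^\theta}(t_i,z)\|_2\le(\varepsilon+\Delta x)/\Delta t+2K\Delta x$. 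For the volatility, the triangle inequality for the Bures metric together with its homogeneity ${\mathcal{B}}(cA,cB)=\sqrt{c}\,{\mathcal{B}}(A,B)$ and the second step give $\sqrt{\Delta t}\,{\mathcal{B}}(\sigma^2_X(t_i,z),\sigma^2_{Y^\theta}(t_i,z))\le\varepsilon+O(\Delta x)$, and it remains to pass from the Bures distance of the covariances to the Euclidean distance of the volatilities. When $d=1$ this is exact, ${\mathcal{B}}(a,b)=|\sqrt a-\sqrt b|$, so dividing by $\sqrt{\Delta t}$ gives $\|\sigma_X(t_i,z)-\sigma_{Y^\theta}(t_i,z)\|_2\le\varepsilon/\sqrt{\Delta t}+2K\Delta x$; when $d>1$ and $Tr(\sigma^2_X(t_i,z))=Tr(\sigma^2_{Y^\theta}(t_i,z))=\alpha$, one invokes the Bures-metric estimate $\|\sigma_X-\sigma_{Y^\theta}\|_2\le\sqrt{2\alpha}\,{\mathcal{B}}(\sigma^2_X,\sigma^2_{Y^\theta})$ (a matrix-square-root inequality in the spirit of those in the references cited after \eqref{eq:newwasserstein}), which after dividing by $\sqrt{\Delta t}$ gives the last display. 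This refines the exact statement (Prop.~\ref{prop:conditioning}) that coincidence in ${\mathcal{W}}_2$ forces coincidence of $b$ and $\sigma$.

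I expect the genuine difficulty to be twofold. First, the careful bookkeeping of the $O(\Delta x)$ cell-discretisation remainders through the nonlinear square root implicit in the Bures metric — this is precisely why $\sigma^2_X,\sigma^2_{Y^\theta}$ are assumed strictly positive, so that those square roots are Lipschitz and the various $O(\cdot)$ terms can be collapsed into the single $2K\Delta x$ of the statement (which needs the partition fine enough relative to $\Delta t$). Second, and more substantial, the matrix inequality in the multivariate case: one needs the trace-matching hypothesis to convert the Bures distance between $\sigma^2_X$ and $\sigma^2_{Y^\theta}$ into a \emph{linear} control on $\|\sigma_X-\sigma_{Y^\theta}\|_2$ with the constant $\sqrt{2\alpha}$. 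One should also keep in mind that the object \eqref{eq:newwasserstein} is the Gelbrich lower bound of the genuine $\mathcal{W}_2$ distance, which is harmless here since only its mean and covariance components are ever used.
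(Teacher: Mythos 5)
Your proposal is essentially correct and reaches all three bounds with the right constants, but it takes a genuinely different route to the key reduction. The paper's proof immediately asserts that, \emph{by definition of} ${\mathcal{W}}_2^2$, there exist representative points $z_1, z_2 \in I_k$ such that the Gaussian distance formula evaluated at $(z_1,z_2)$ is $\le \varepsilon^2$ — effectively replacing the interval-conditioned mixture laws by single Gaussians with parameters at those points — and then propagates the Lipschitz error $K\Delta x$ through the triangle inequality. You instead compute the exact first two moments of the mixtures via the law of total expectation/variance, observe that ${\mathcal{W}}_2$ as defined in (\ref{eq:newwasserstein}) depends only on these moments, and identify each moment with its pointwise value at $z$ up to $O(K\Delta x)$. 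The two approaches are morally equivalent and have the same blind spot: the variance of the conditional mean within the cell, $Var(X_{t_i}+b_X(t_i,X_{t_i})\Delta t \mid X_{t_i}\in I_k) = O(\Delta x^2)$, is an extra additive term in the mixture covariance that is not present in the pointwise Gaussian; for it to be absorbable into $2K\Delta x$ after dividing by $\sqrt{\Delta t}$ one tacitly needs $\Delta x^2 \ll \Delta t$. You at least flag this explicitly ("may be absorbed for a fine enough partition"), whereas the paper's "by definition" step silently drops it, so your version is if anything more honest about what is being estimated. For the volatility bound with $d>1$ the paper passes through the Hellinger distance and cites the inequality $\mathcal{H}(A,B) < \sqrt{2}\,\mathcal{B}(A,B)$ of \citet{spehner2017geometric} applied to the trace-normalized (density) matrices, which is exactly the matrix-square-root inequality you invoke but leave unnamed; your treatment buys nothing new there but is not wrong. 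Overall: same skeleton, a cleaner and more transparent moment computation on your side replacing a hand-waved representative-point step on the paper's side, and a shared implicit fineness condition on the partition.
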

As described in \ref{sec:proofconditionalwasserstein}, the previous result is proved using useful inequalities between Hellinger and Bures distances. The $\alpha$ coefficient comes from the need of using density matrices, in practice one can easily normalize covariance matrices by their traces.
Proposition \ref{prop:inegalite} implies that by conditioning over sufficiently small intervals, a low ${\mathcal{W}_2}$ loss between transition distributions guarantees a good diffusion and drift representation. 



\section{Numerical Study}\label{sec:numerical}
We now turn to the numerical evaluation of EWGAN, EDGAN and CEGEN in comparison to the state-of-the-art TSGAN and COTGAN, on various synthetic and real time series. Neural network architectures and hyper-parameters are described in Appendix \ref{sec:hyperparameters}. 


\subsection{Datasets} 
Two kinds of datasets are used: synthetic and real time series dataset.
In single dimension, we use Black\&Scholes (BS) model ($d X_t =r X_t dt + \sigma X_t dW_t$) and an Ornstein-Uhlenbeck (OU) model ($d X_t = \theta(\mu-X_t) dt + \sigma dW_t$). For these two stochastic models, our empirical references are drawn from Monte Carlo (MC) simulators. The simulations are performed on a regular time grid of $30$ dates, the maturity is $0.25$ (1 simulation per day for 3 months) and $X_0 = 0.2$. BS model (resp. OU) has coefficients of $r=0.8$, $\sigma=0.3$ (resp. $\sigma=0.1, \mu=0.6$ and $\theta=7$).
In higher dimensions, we proceed with the same methodology but with multivariate correlated BS time series ($d$ = 4, 10, 20).
The real datasets include various nature of time series and are detailed in Appendix \ref{sec:data}.

\subsection{Evaluation metrics}
\label{sec:evaluation}
We consider several metrics to evaluate the accuracy of the generators. For all metrics the lower, the better.

\paragraph{(1) Marginal metrics.} These metrics quantify the quality at each time step of the marginal distributions induced by the generated samples in comparison to the empirical ones. This includes Fréchet Inception Distance (FID) \citep{heusel2017gans} as well as classical statistics (mean, 95\% and 5\% percentiles, maximum and minimum denoted respectively Avg, q95, q05, Max, Min). We systematically compute the mean squared error (MSE) over time of these statistics between the real and generated samples. This helps measuring whether a generator manages to get an accurate overall envelope of the processes.

\paragraph{(2) Temporal dynamics.} This metric aims at quantifying how the generator is able to capture the underlying time structure of the signal. For this purpose, we compute the difference between the quadratic variations of both reference and generated time series. The quadratic variation (QVar) of an Itô process $X$ is given by $[X]_t=\int_0^t\sigma^2_X(s,X_s) ds$. Thus the temporal metric ensures that the diffusion $\sigma_X$ is well estimated too. We compute $[X]_t$ in the discrete case with $\sum_{i} |X_{t_{i+1}}-X_{t_{i}}|^2$.

\paragraph{(3) Correlation structure.} The metric denoted \textit{Corr} in the following is the term-by-term MSE between empirical correlation from reference samples on one side and from generated samples on the other side. It evaluates the ability of a generator to capture the multi-dimensional structure of the signal. 

\paragraph{(4) Underlying process parameters.} A by-product output of Euler-based generators are the estimated drift $ b^\theta_Y(.)$ and diffusion $ \sigma^\theta_Y(.)$ coefficients of the generator. When using synthetic data, we can compare the true underlying processes parameters to the estimated ones. In the BS case, the drift and volatility coefficients are estimated by the empirical average of $( b
^\theta_Y(t, Y^\theta_t)/Y^\theta_t)_{t\in\mathcal{T}}$ and $( \sigma
^\theta_Y(t, Y^\theta_t)/Y^\theta_t)_{t\in\mathcal{T}}$. In the OU case, $\sigma^\theta_Y$ is estimated in a similar manner, while $\theta$ and $\mu$ are estimated by regressing $ b^\theta_Y$ on $(t,Y^\theta_t)$. These statistics cannot be computed in the same way with TSGAN due to its specific deep embedding, nor COTGAN.

\textbf{(5) Discriminative and predictive scores.} We use two distinct scores, as proposed in \citep{yoon2019time}. First, we train a classification model (a 2-layer LSTM) to distinguish real sequences from the generated ones. The accuracy of the classifier provides the discriminative score. Second, the predictive score is obtained by training a sequence-prediction model (a 2-layer LSTM) on generated time series to predict the next time step value over each input sequence. Performance is measured in terms of MAE.

\FloatBarrier
\subsection{One-dimensional simulated process (Exp. A)}\label{sec:onedim}

\begin{table}
\centering
\begin{tabular}{cccc}
\hline 
\footnotesize
& CEGEN&  EWGAN&  EDGAN \\ \hline\hline
\multicolumn{4}{c}{\textbf{Black-Scholes}}\\ 
$\hat{r}$ (0.8)&     \textbf{0.739}&  0.581&         0.996\\
$\hat{\sigma}$  (0.3)&     0.324&  \textbf{0.314}&         0.379\\ \hline
  \multicolumn{4}{c}{\textbf{Ornstein-Uhlenbeck}}\\ 
${\theta}$ (7.0)&    \textbf{7.05}&         4.36&            4.68\\
$\hat{\mu}$ (0.6)   &    \textbf{0.60}&         0.75&            0.72\\
$\hat{\sigma}$ (0.1)&    \textbf{0.11}&         0.16&           0.02\\ \hline 
\end{tabular}
\caption{\textbf{Exp. A. }Model parameter estimations for drift and volatility function.}
\label{tab:estimateddrfitandsigma}
\end{table}
We start with a unidimensional time series by comparing the five generators in the OU case. Figure \ref{fig:ousamples} illustrates how crucial is the balance between the estimation of the marginal distributions and the temporal structure.
On the one hand, the trend and marginal distributions of the time series generated by both GANs seem close to the empirical benchmark. However, the temporal dynamics between two time steps is not respected as confirmed by the QVar metric in Table \ref{tab:envelope} (in Appendix).
On the other hand, CEGEN model manages to capture the overall envelope and is able to fit the dynamics of time series as the QVar metrics highlights in Table \ref{tab:envelope}.
Table \ref{tab:estimateddrfitandsigma} reports the reference drift and volatility coefficients with those obtained by the three Euler-based generators. We can see a good estimation of the CEGEN method and to a lesser extent of the EWGAN method while EDGAN fails to estimate the parameters correctly. Euler structure alone does not manage to recover the right parameter values.
To conclude this section, it appears that regarding the overall dynamics and the marginals, CEGEN is a reliable generator of time series. The question we address in the next section is how CEGEN scales to higher dimensions.

\begin{figure}[H]
    \centering
    \includegraphics[width=.25\linewidth]{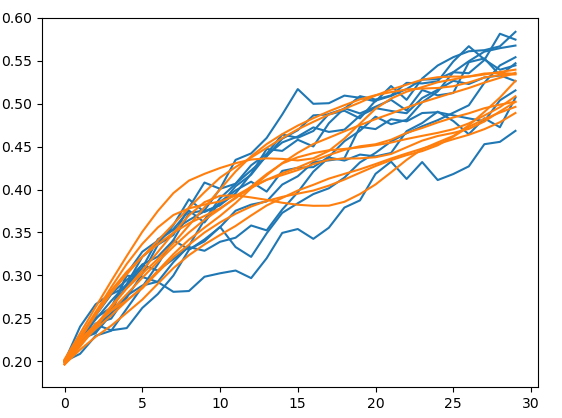}
    \includegraphics[width=.25\linewidth]{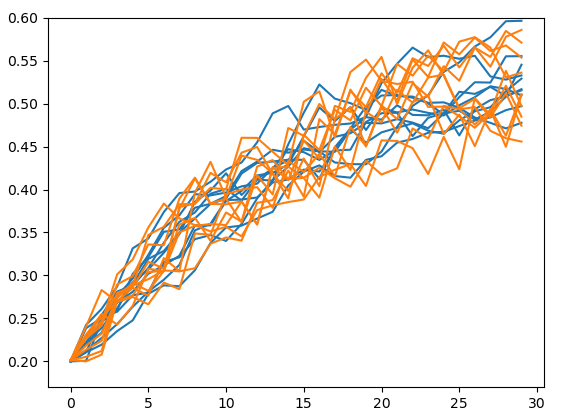}
    \includegraphics[width=.25\linewidth]{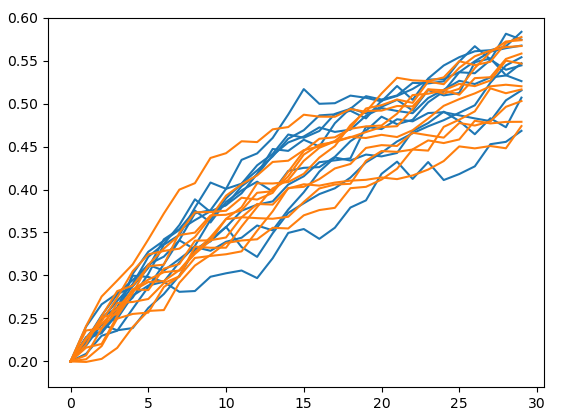}
    \caption{\textbf{Exp. A.} Example of Ornstein-Uhlenbeck samples (in blue) with COTGAN, TSGAN and CEGEN generations (orange).}
    \label{fig:ousamples}
\end{figure}


\FloatBarrier
\subsection{Scaling the dimension (Exp. B)}\label{sec:multidim}
\begin{table}[H]
    \centering
    \resizebox{0.6\textwidth}{!}{
    \begin{tabular}{c ccccc}
    \hline
    Dim& CEGEN&    EWGAN&      EDGAN&      TSGAN&       COTGAN\\ \hline\hline
    4  &   \textbf{.007}&.015&.053&.177&.031\\
    10 &   \textbf{.011}&.055&.022&.259&.035\\
    20 &   \textbf{.006}&.034&.014&.481&.019\\ \hline
    \end{tabular}
    }
    \caption{\textbf{Exp. B} \small MSE between reference and generator empirical correlation matrices on Black-Scholes.}
    \label{tab:correlation}
\end{table}
Table \ref{tab:correlation} reports the discrepancies between reference empirical correlation and generated time series correlation for dimension $d=4,10,20$. We can see that in dimension up to 20, CEGEN obtains a significant improvement compared to all the GANs. 
Figure \ref{fig:scorescaleplot} illustrates how well the CEGEN generator outperforms the other generators with respect to the FID and QVar metrics in the higher dimensions.
This is confirmed by statistics on volatility and drift, as well as
by envelope statistics described in Table \ref{tab:envelopedimensions} in Appendix. Figure \ref{fig:egenbures20samp} shows that the 20 processes envelopes are well respected by CEGEN. 
These good global performances encourage us to focus on the conditional generator in the following transfer learning section.
\begin{figure}[h]
    \centering
    \includegraphics[width=0.4\linewidth]{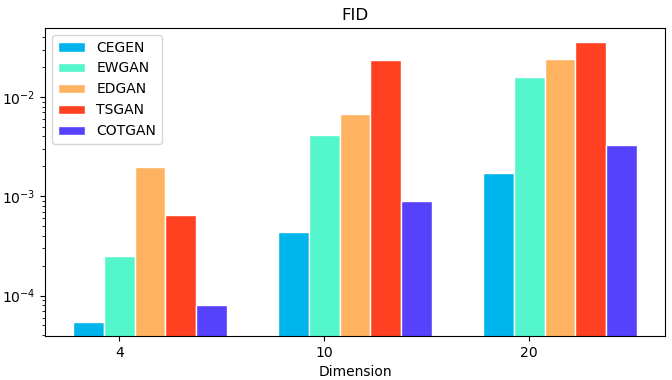}
    \includegraphics[width=0.4\linewidth]{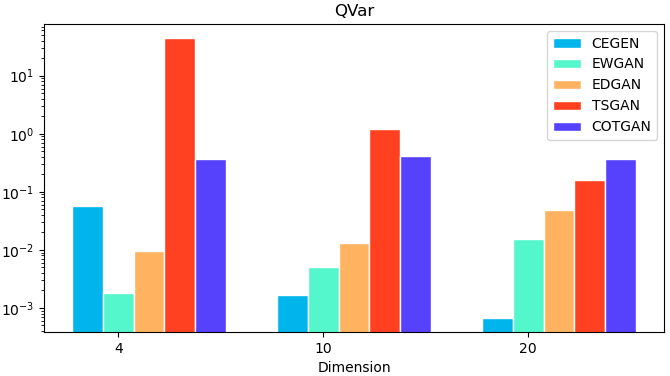}
    \caption{\textbf{Exp. B. } \textbf{Left:}  Average of Fréchet Inception Distance between distributions at each time step. \textbf{Right:} MSE between quadratic variations. (Log scale).}
     \label{fig:scorescaleplot}
\end{figure}

\FloatBarrier
\subsection{Transfer learning for small dataset (Exp. C)}\label{sec:transferfrugal}
Deep generators may need more data than available to be trained effectively. As is done in transfer learning \citep{torrey2010transfer}, we propose to start the training with a reasonably wrong model and to finish up the training with the few real data samples. This situation is tested on synthetic data and allows us to track the drift and volatility parameters evolution during the training phase.
The reference data are assumed to come from samples simulated from an OU process, while the wrong but reasonable Monte Carlo samples come from a misspecified OU model. The original (resp. misspecified) parameters are $\sigma=0.15$, $\mu=0.6$, $\theta=2.0$ (resp. $\sigma_{MC}=0.1$, $\mu_{MC}=0.8$, $\theta_{MC}=3.0$) and original data sequences include only 60 sequences of 30 dates (5 years of monthly measures).
The CEGEN with transfer is compared with a CEGEN only trained with the few available real sequences.

Figure \ref{fig:adpatparam} provides the coefficient evolution of both generators during the training process. The transfer iteration time is represented by the red vertical line. Firstly trained with miscalibrated OU model, the transfer learning approach is able to retrieve the parameters when fed with few samples of the target model. The generator only trained with few real samples is unable to estimate correctly the $\theta$ and $\sigma$ coefficients, but exhibits a better  estimation of $\mu$. The CEGEN benefiting from the transfer learning takes advantage of the initial training phase and provides an overall better estimation. 

\begin{figure}
    \centering
    \includegraphics[width=0.32\linewidth]{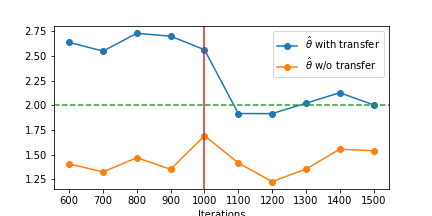}
    \includegraphics[width=0.32\linewidth]{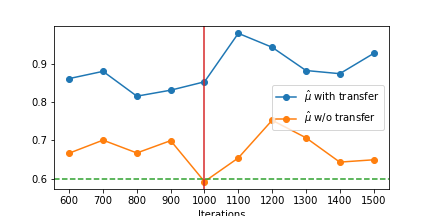}
    \includegraphics[width=0.32\linewidth]{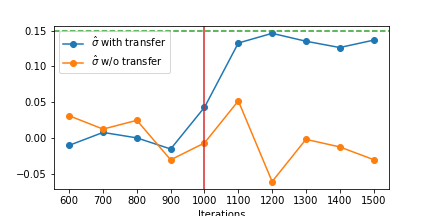}
    \caption{\textbf{Exp. C. } \small Evolution of parameter estimations during training when a transfer occurs at iteration 1000 (red lines). The dashed green lines correspond to the theoretical target value of the real model. The orange lines indicates coefficient estimation of CEGEN only trained on few data and blue lines CEGEN which is first trained on misspecified model then with few real data.} 
    \label{fig:adpatparam}
\end{figure}

This framework is a way to update an existing model with the help of incoming real data. In this situation, the training would start with samples simulated from the consistent model and end with real world inputs.

\subsection{Experiments on real-world datasets (Exp. D)}\label{sec:realworld}
Finally, we test the CEGEN algorithm on various real data with heterogeneous time series.
In Table \ref{tab:realdata}, we compare CEGEN performances with the help of FID, QVar and Corr. Our model outperforms GANs or are close in term of FID and QVar for each real times series, and captures well the correlation structure of the majority of the signals. However, some QVar from TSGAN or COTGAN are lower than CEGEN despite their generated trajectories are significantly smoother than real data. To better evaluate the fidelity of the generation we need to consider \textit{post-hoc} metrics. 
Table \ref{tab:dispredscores} reports discriminative and predictive scores for all models (except EWGAN where scores can be found in Appendix), the lower the better. Our generator almost consistently generates higher-quality time series in comparison to the benchmark. On Electric Load data, COTGAN is better able to capture seasonality of the times series, but generates too smooth trajectories. In the opposite CEGEN proposes more faithful times series in term of noise, but struggles to fool the classifier.

\begin{table}[h!]
    \centering
    \begin{tabular}{l | ccc | ccc}   
    \hline
    Data&\multicolumn{3}{c|}{CEGEN}&\multicolumn{3}{c}{EDGAN}\\
    &                   FID & QVar & Corr & FID & QVar & Corr\\\hline\hline
    Spot prices (d=2) & 1.38e-04&2.09e+00&\textbf{2.10e-02} & 3.11e-03&2.18e+00&4.12e-02\\
    Stocks (d=6) & \textbf{1.04e-04}&2.10e+01&2.33e-03 & 7.93e-03&2.43e+01&9.78e-03\\
    Electric Load (d=12) &6.47e-03&4.30e+00&\textbf{1.27e-03}  &  4.62e-02&1.27e+00&1.56e-03\\
    Jena climate (d=15) &\textbf{1.10e-03}&\textbf{7.18e+00}&\textbf{1.75e-02}  & 4.39e-02&7.73e+00&1.46e-01\\
    \hline
    \hline
    Data&\multicolumn{3}{c|}{TSGAN}&\multicolumn{3}{c}{COTGAN}\\
    &                   FID & QVar & Corr & FID & QVar & Corr\\\hline\hline
    Spot prices (d=2) & 2.12e-04&\textbf{9.00e-02}&4.45e-02  &  \textbf{1.09e-04}&8.25e-01&4.15e-02 \\
    Stocks (d=6) & 3.46e-03&2.19e+01&2.76e-01  &  1.49e-04&\textbf{1.86e+01}&\textbf{1.62e-03}    \\
    Electric Load (d=12) &  \textbf{5.12e-03}&\textbf{9.18e-01}&1.87e-03 & 4.10e-01&3.45e+00&6.27e-01\\
    Jena climate (d=15) &    4.07e-03&8.49e+01&1.89e-02 & 4.48e-03&7.90e+00&2.34e-02 \\
    \hline
    \end{tabular}
    \caption{\textbf{Exp. D. }   Accuracy evaluations for generations on real world time series (the lower, the better).}
    \label{tab:realdata}
\end{table}

\begin{table}[h!]
    \centering
    \footnotesize
    \begin{tabular}{l | cc | cc | cc|cc}
    \hline
    &\multicolumn{2}{c|}{CEGEN}&\multicolumn{2}{c|}{EDGAN}&\multicolumn{2}{c|}{TSGAN}&\multicolumn{2}{c}{COTGAN}\\
    Data &                  Disc & Pred & Disc & Pred & Disc & Pred & Disc & Pred\\
    \hline\hline
    Spot prices (d=2)    &\textbf{.014}&\textbf{.049}  &.137&\textbf{.049} &.066&.055  & .033&\textbf{.049}     \\
    Stocks (d=6)         &\textbf{.079}&\textbf{.040}  &.429&.041 &  .159&.041   & .116&.041\\
    Electric Load (d=12) & {.433}&{.028}               & .495&.046&   {.407}&.032  &  \textbf{.277}&\textbf{.022}   \\
    Jena climate (d=15)  & \textbf{.140}&\textbf{.032}  & .483&.035&  .179&\textbf{.032}  &  .227&.042 \\
    \hline
    \end{tabular}
    \caption{\textbf{Exp. D. } Discriminative and Predictive scores on real time series  (the lower, the better).}
    \label{tab:dispredscores}
\end{table}

\FloatBarrier
\section*{Conclusion}

We introduced three generative methods for times series, relying on a Deep Euler representation and Wasserstein distances. Two generative method EWGAN and EDGAN demonstrate an accuracy similar to state-of-the-art GAN generators and show better performance for capturing temporal dynamic metrics of the time series. The third method CEGEN is based on a loss metric computed on the conditional distributions of the time series. We prove that minimizing this loss ensures a proper estimation of the drift and volatility coefficients of underlying Itô processes.  Our experiments on synthetic and real-world datasets demonstrate that CEGEN outperforms the other generators marginal and temporal dynamics  metrics. CEGEN is able to capture correlation structures in high dimensions and is robust when combined with transfer learning on sparse datasets. Transfer learning tests show how this type of method can rely on a proven simulation model without replacing it completely. In further work, we plan to consider more specialized neural networks architectures for time series, extend our results to more general Lévy processes, which may include jumps, and consider not Gaussian noise.

\section*{Broader impact}
Generative methods for time series may be involved in industries using stochastic control and stochastic simulation methods making them of particular interest for the financial industry, for utilities and energy companies. When applied within a decision-making process, generative methods has to be used carefully as a failure during learning phase may lead to damageable consequences. In this situation, the outputs of the generators should not be left free, as this could lead to erratic optimal controls. Contrarily to the existing approaches which applies GANs and embedding to generate any kind of time series, we impose an Euler structure and we restrain ourselves within the (sufficiently) large class of Itô processes. One of the proposed algorithms manages to get good behavior for synthetic as well as for real data. Moreover, mathematical proofs gives an error estimate of the underlying process parameters for a given loss level. 
\bibliography{biblio}

\newpage 
\appendix
\section{Material of Section \ref{sec:deepconditionalmethod}}\label{Appx A}

\subsection{Counterexample}
\label{sec:counterexample}
Consider a timegrid $\{t_0,t_1,t_2\}$ with $\Delta t = t_{i+1} - t_i$. Let $b_X:(t,x)\rightarrow -2x/ \Delta t$, $b_Y:(t,x)\rightarrow 0$ and $\sigma_X(t,x) = \sigma_Y(t,x) = 1$ for all $x\in\mathbb{R}$. The $b$'s and $\sigma$'s are Lipschitz in the second coordinate:
\begin{eqnarray*}
\|b_X(t,x) - b_X(t, y) \|_2 \leq K \|x - y \|_2\\
\|b_Y(t, x) - b_Y(t, y) \|_2 \leq K \|x - y \|_2
\end{eqnarray*}

\noindent For the uni-dimentional case, we have:
\begin{table}[h]
\begin{tabular}{c c c}
$X_{t_0}=0$, & $X_{t_1} = \varepsilon^X_{t_1}$, & $X_{t_2} = -X_{t_1} + \varepsilon^X_{t_2}$\\
$Y_{t_0}=0$, & $Y_{t_1} = \varepsilon^Y_{t_1}$, & $Y_{t_2} = Y_{t_1} + \varepsilon^Y_{t_2}$
\end{tabular}
\end{table}

with $\varepsilon^X_{t_i},\varepsilon^Y_{t_i} \sim \mathcal{N}(0,\Delta t)$ and being i.i.d.
Then, for $i\in\{0,1,2\}$, we have $\mathcal{L}(X_{t_i})=\mathcal{L}(Y_{t_i})$ but $\mathbb{E}[X_{t_2}|X_{t_1}=z]=-z \neq z= \mathbb{E}[Y_{t_2}|Y_{t_1}=z]$

\subsection{Motivation - Details}
\label{prop:conditioning}
\begin{prop}\label{prop:conditioning_z}
Assume that for all $t_i\in \{t_0,\ldots,  t_N = T\}$, for all $z\in \mathbf{R}^d$, $X_{{t_i}+\Delta t}| X_{t_i}=z$ and $Y_{{t_i}+\Delta t}|  Y_{{t_i}}=z $ are identically distributed and that $\sigma_X(t_i,z) \sigma_X(t_i,z)^T$ (resp. $\sigma_Y(t_i,z) \sigma_Y(t_i,z)^T$) are positive semi-definite. Then $b_X(t_i,z) = b_Y(t_i,z)$ and $ \sigma_X(t_i,z)= \sigma_Y(t_i,z).$ 
\end{prop}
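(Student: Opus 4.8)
The statement concerns the exact case: if the one-step transition laws of the two Euler schemes conditioned on a point $z$ coincide for every $t_i$ and every $z$, then the drift and diffusion coefficients coincide at each $(t_i,z)$. The idea is simply to read off the conditional mean and conditional covariance from the Euler update formula. Fix $t_i\in\mathcal{T}$ and $z\in\mathbb{R}^d$. From the recursion \eqref{eq:eulerY} (and its analogue for $X$), conditionally on $X_{t_i}=z$ we have
\begin{eqnarray*}
X_{t_i+\Delta t} = z + b_X(t_i,z)\,\Delta t + \sigma_X(t_i,z)\,\Delta W_{t_i},
\end{eqnarray*}
where $\Delta W_{t_i}\sim\mathcal{N}(0,\Delta t\, I_d)$. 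Hence $X_{t_i+\Delta t}\mid X_{t_i}=z$ is Gaussian with mean $z + b_X(t_i,z)\Delta t$ and covariance $\Delta t\,\sigma_X(t_i,z)\sigma_X(t_i,z)^T$; the same holds for $Y$ with $b_Y,\sigma_Y$ in place of $b_X,\sigma_X$.

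\textbf{Key steps.} First, equality of the two conditional laws forces equality of their means, i.e. $z + b_X(t_i,z)\Delta t = z + b_Y(t_i,z)\Delta t$, and dividing by $\Delta t>0$ gives $b_X(t_i,z)=b_Y(t_i,z)$. Second, equality of the conditional laws forces equality of covariances, i.e.
\begin{eqnarray*}
\sigma_X(t_i,z)\sigma_X(t_i,z)^T = \sigma_Y(t_i,z)\sigma_Y(t_i,z)^T,
\end{eqnarray*}
after cancelling the common factor $\Delta t$. Third, one must pass from equality of the Gram matrices $\sigma\sigma^T$ to equality of $\sigma$ itself. This is the only nontrivial point: in general $AA^T=BB^T$ does not imply $A=B$ (they can differ by an orthogonal factor). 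The resolution is that here $\sigma_X\sigma_X^T$ and $\sigma_Y\sigma_Y^T$ are assumed positive semi-definite and, implicitly in this framework, $\sigma_X(t_i,z)$ and $\sigma_Y(t_i,z)$ are taken to be the (unique) symmetric positive semi-definite square roots of these matrices — consistent with the Bures-metric setting of the paper, where volatilities are identified with covariance square roots. Since the symmetric positive semi-definite square root of a positive semi-definite matrix is unique, $\sigma_X(t_i,z)\sigma_X(t_i,z)^T=\sigma_Y(t_i,z)\sigma_Y(t_i,z)^T$ yields $\sigma_X(t_i,z)=\sigma_Y(t_i,z)$.

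\textbf{Main obstacle.} The calculation with means and covariances is routine; the delicate step is the identifiability of $\sigma$ from $\sigma\sigma^T$, which must be handled by invoking the uniqueness of the symmetric positive semi-definite square root (equivalently, by noting that only the covariance $\sigma\sigma^T$ is observable from the law, so the statement is to be understood modulo this canonical choice). Once that convention is made explicit, the argument closes immediately, and since $t_i$ and $z$ were arbitrary the conclusion holds for all $t_i\in\{t_0,\ldots,t_N=T\}$ and all $z\in\mathbb{R}^d$.
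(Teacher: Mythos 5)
Your proposal is correct and follows essentially the same route as the paper's own proof: read off the Gaussian conditional law $\mathcal{N}\left(z + b\,\Delta t,\; \sigma\sigma^T \Delta t\right)$ from the Euler step, equate means to get $b_X(t_i,z)=b_Y(t_i,z)$, equate covariances to get $\sigma_X\sigma_X^T=\sigma_Y\sigma_Y^T$, and conclude via uniqueness of the positive semi-definite square root. The only difference is that you make explicit the convention (identifying $\sigma$ with the unique symmetric PSD square root of the covariance) that the paper's "unique square root" step leaves implicit, which is a clarification of the same argument rather than a different approach.
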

\begin{proof}
\label{sec:conditioning_z}
Let $t_i \in \{t_0,\dots , t_n = T \}$. For $z\in \mathbb{R}^d$.
We have, 
\begin{eqnarray*}
X_{{t_i} +\Delta t}| (X_{{t_i}} = z) &\sim& \mathcal{N}\left(z +  b_X({t_i},z) \Delta t,\sigma^2_X({t_i},z) \Delta t \right) \\
Y_{{t_i} +\Delta t}| (Y_{{t_i}} = z) &\sim& \mathcal{N}\left(z +  b_Y(t_i,z) \Delta t , \sigma^2_Y({t_i},z) \Delta t\right)
\end{eqnarray*}
then, $ b_Y(t_i,z) =  b_Y(t_i,z)$ and $\sigma_Y (t_i,z) \sigma_Y (t_i,z) ^T =  \sigma_X(t_i,z) \sigma_X(t_i,z) ^T$ for $z\in\mathbb{R}^d$. Matrix $\sigma_X({t_i},z) \sigma_X({t_i},z)^T$ being PSD, has a unique square root which is $\sigma_X (t_i,z)$. The same goes for $\sigma_Y (t_i,z)$. So, $\sigma_Y (t_i,z)= \sigma_X (t_i,z)$.
\end{proof}

\subsection{Proof of Proposition \ref{prop:inegalite}}\label{sec:proofconditionalwasserstein}
Assume that $\sigma^2_X(t_i, .)$, $\sigma^2_{Y^\theta}(t_i, .)$ are strictly positive and, together with $b_X(t_i, .)$ and $ b_{Y^\theta}(t_i, .)$, $K$-Lipschitz in their second coordinate. 
For $t_i\in\mathcal{T}$, let $(I_k)_k$ be a regular partition covering Supp$(X_{t_i})\cup$ Supp$(Y_{t_i})$ with mesh size $\Delta x$.
Let $\varepsilon>0$.\\
The Itô process $X$ follows the dynamics:
\begin{eqnarray*}
X_{t+\Delta t} =  X_{t} + b_X(t,X_t) \Delta t + \sigma_X (t,X_t) \mathcal{N}(0, \Delta t) \end{eqnarray*}
Thus, for all $z\in \mathbb{R}^d$, $X_{{t_i} +\Delta t}| (X_{{t_i}} = z) \sim \mathcal{N}\left(z +  b_X({t_i},z) \Delta t,\sigma^2_X({t_i} ,z)\Delta t \right)$ and the same goes for process $Y$.\\

Let $I_k= [a_k^1,a_{k+1}^1]\times [a_k^d,a_{k+1}^d]$. Suppose that for all $t_i\in \{t_0,\ldots,  t_N = T\}$, we have
\begin{eqnarray}
{\mathcal{W}}^2_2\left(\mathcal{L}(X_{t_{i+1}}| (X_{t_i}\in I_k)), \mathcal{L}(Y_{t_{i+1}}| ( Y_{{t_i}}\in I_k)\right)\leq \varepsilon
\end{eqnarray}

then by definition of ${\mathcal{W}}^2_2$, $\exists z_1, z_2\in I_k$, such that: 
\begin{eqnarray}
    \| z_1 + b_X(t_i, z_1)\Delta t - z_2 - b_Y(t_i, z_2) \Delta t\|_2^2 \nonumber\\
    +\mathcal{B}^2(\sigma^2_X(t_i, z_1)\Delta t , \sigma^2_Y(t_i, z_2)\Delta t )\leq \varepsilon \label{eq:inegalite1}
\end{eqnarray}

By standard norm inequalities and Lipschitz properties of $b_X(t_i,.)$ and $b_Y(t_i,.)$, we bound the squared distance of drifts for all $z\in I_k$ and with mesh grid $\Delta x = \|a_{k+1} - a_k\|_2$.
$$\| \left(b_X({t_i},z_1) -  b_Y(t_i,z_2)\right)\Delta t + (z_1 - z_2) \|_2^2 \leq \varepsilon$$

\begin{eqnarray*}
\|b_X({t_i},z_1)-  b_Y(t_i,z_2)\|_2 &\leq& \frac{\sqrt{\varepsilon} + \Delta x}{\Delta t}  \\ 
\|b_Y(t_i,z_2) -b_X({t_i},z) \|_2  &-&\|b_X({t_i},z_1) - b_X(t_i,z)\|_2 \nonumber\\
&\leq& \frac{\sqrt{\varepsilon}+ \Delta x}{\Delta t}\\ 
\|b_Y(t_i,z_2) -b_X({t_i},z) \|_2 &\leq& \frac{\sqrt{\varepsilon}+ \Delta x}{\Delta t} + K\Delta x\\ 
\|b_Y(t_i,z) - b_X({t_i},z)\|_2   &\leq& \frac{\sqrt{\varepsilon}  +\Delta x}{\Delta t} + 2K \Delta x
\end{eqnarray*}

We recall that the Bures metrics \citep{bhatia2019bures,malago2018wasserstein} between positive definite matrices $A$ and $B$ is defined by
$$\mathcal{B}^2(A, B) {=} Tr(A) + Tr(B) - 2 Tr(A^\frac{1}{2}BA^\frac{1}{2})^{1/2}$$
For volatility bound, from Equation (\ref{eq:inegalite1}) we have,
In the \textbf{d=1} case, this implies $\|\sigma_X(t_i, z_1)\sqrt{\Delta t} -  \sigma_Y(t_i, z_2)\sqrt{\Delta t}\|_2^2 \leq \varepsilon$ which leads to : for all $z\in I_k$,
\begin{eqnarray}
\|\sigma_X(t_i, z) - \sigma_Y(t_i, z)\|_2 \leq \sqrt{\frac{\varepsilon}{\Delta t}} + 2K\Delta x  \nonumber
\end{eqnarray} 
For \textbf{d$>$1}, let's denote $\mathcal{H}$ the Hellinger distance between positive density matrices: 
\begin{eqnarray}
\mathcal{H}(A, B) {=} \|A^\frac{1}{2} - B ^\frac{1}{2}\|_2 
\end{eqnarray}
For two density matrices $A$ and $B$, from \citep{spehner2017geometric} (Equation 74) 
we have $\mathcal{H}(A, B) < \sqrt{2}\mathcal{B}(A, B)$. Following the trace assumption, $Tr(\sigma^2_X(t_i, z_1)) = Tr(\sigma^2_Y(t_i, z_2)) = \alpha$ and then, 
\begin{eqnarray}
\mathcal{H}\left(\frac{\sigma^2_X(t_i, z_1)}{\alpha}\Delta t, \frac{\sigma^2_Y(t_i, z_2)}{\alpha}\Delta t\right) &\leq& \sqrt{2\varepsilon} 
\end{eqnarray}
Thus we get,
\begin{eqnarray}
\|\sigma_X(t_i, z_1) - \sigma_Y(t_i, z_2)\|_2  \leq \sqrt{\frac{2\alpha \varepsilon}{\Delta t }} .
\end{eqnarray}
In particular, using the $K$-Lipschitz property of the volatility functions, we obtain: for all $z\in I_k$,
\begin{eqnarray}
\|\sigma_X(t_i, z) - \sigma_Y(t_i, z)\|_2 \leq \sqrt{\frac{2\alpha\varepsilon}{\Delta t }}   + 2K\Delta x
\end{eqnarray}

\begin{remark} The Proposition above extends to the Wasserstein-2 loss if we conditionate from points instead of conditioning from intervals. Indeed, suppose for all $t_i\in \{t_0,\ldots,  t_N = T\}$, we have
\begin{eqnarray}
\mathcal{W}^2_2\left(\mathcal{L}(X_{t_{i+1}}| (X_{t_i}=z_1), \mathcal{L}(Y_{t_{i+1}}| ( Y_{{t_i}}=z_2)\right)\leq \varepsilon
\end{eqnarray}
then, as we have : 
\begin{eqnarray*}
X_{{t_i} +\Delta t}| (X_{{t_i}} = z_1) &\sim& \mathcal{N}\left(z_1 +  b_X({t_i},z_1) \Delta t,\sigma^2_X({t_i},z_1)\Delta t \right) \\
Y_{{t_i} +\Delta t}| (Y_{{t_i}} = z_2) &\sim& \mathcal{N}\left(z_2 +  b_Y(t_i,z_2) \Delta t , \sigma^2_Y({t_i},z_2)\Delta t\right)
\end{eqnarray*}
We can use the closed form of the Gaussian expression of Wasserstein-2:
\begin{eqnarray}
    \| z_1 + b_X(t_i, z_1)\Delta t - z_2 - b_Y(t_i, z_2) \Delta t\|_2^2\nonumber\\
    + \mathcal{B}^2(\sigma^2_X(t_i, z_1)\Delta t , \sigma^2_Y(t_i, z_2)\Delta t )\leq \varepsilon 
\end{eqnarray}
Similar results as proof \ref{prop:inegalite} follow. 
\end{remark}

\section{Additional numerical results}\label{sec:numericalAppendix}

\begin{figure}
    \centering
    \includegraphics[width=0.7\linewidth]{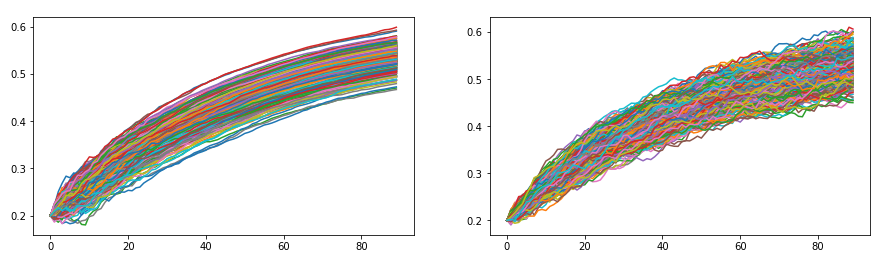}
    \caption{\textbf{Left:} Unsatisfactory generations from GAN \textbf{Right:} Reference (Ornstein-Uhlenbeck)}
    \label{fig:sinkhorn}
\end{figure}

\subsection{Experiment A - Unidimensional case with synthetic data (Exp. A) }\label{sec:additionalExpA}
Quantitative evaluations provided by Table \ref{tab:envelope} for each models highlight how CEGEN and TSGAN stand out in term of average moments accuracy for a 1-dimensional Black-Scholes. We can see that the benchmark TSGAN and our model CEGEN are faithful to synthetic trajectories and outperform both Euler GANs. However, CEGEN is also able to capture the temporal dynamics of both processes, as QVar metrics reports.


\begin{table}[h!]
    \centering
    \resizebox{.5\textwidth}{!}{
    \begin{tabular}{c cccc}
    \hline
    &\multicolumn{4}{c}{\textbf{Black-Scholes}}\\ 
    metrics &CEGEN&EWGAN&EDGAN&TSGAN\\\hline\hline
    q05&9.95e-05&2.55e-02&3.49e-03&\textbf{2.61e-06}\\
    Avg&\textbf{8.01e-07}&4.04e-02&2.20e-05&9.44e-07 \\
    q95&\textbf{7.85e-06}&6.02e-02&2.85e-03&2.47e-05  \\
    QVar&\textbf{4.54e-04}  & 7.30e-02  &  4.12e-02  & 2.38e+00\\\hline
    &\multicolumn{4}{c}{\textbf{Ornstein-Uhlenbeck}}\\
    metrics &CEGEN&EWGAN&EDGAN&TSGAN\\\hline\hline
    q05&  4.89e-04&3.98e-03&7.30e-02&\textbf{2.96e-06}\\
    Avg&  \textbf{2.27e-07}&2.39e-05&4.47e-02&2.25e-06\\
    q95&  8.55e-04&3.98e-03&2.38e-02&\textbf{6.49e-06}\\
    QVar& 4.59e-03  & 2.51e+00  & \textbf{1.67e-03}  & 1.04e+00\\\hline
    \end{tabular}
    }
    \caption{\textbf{Exp. A} Mean squared error (MSE) between reference samples and generated time series on marginal metrics.}
    \label{tab:envelope}
\end{table}

\subsection{Experiment B - Multidimensional case with synthetic data (Exp. B)}\label{sec:additionalExpB}
Figure \ref{fig:egenbures20samp} reports envelope of samples from CEGEN model (orange) on a 20-dimensional Black-Scholes (blue). Full lines give the marginal averages over time, and dash lines give average 5\% and 95\% quantiles respectively. Our generator is still able to retrieve faithfully the average moments in high dimension. This is confirmed with quantitative evaluation provided by Table \ref{tab:envelopedimensions} where the benchmark TSGAN and CEGEN stand out compared to Euler GANs.\\
Empirical correlation matrices are also retrieved by CEGEN up to dimension 20. In Figure \ref{fig:correlation20}, we represent the reference empirical correlation alongside the one coming from generated samples in both correlated and independent case. The term by term mean squared error of correlation matrices (the more black, the better) confirms that CEGEN generations are highly realistic.

\begin{figure*}[!ht]
\centering 
  \includegraphics[width=0.7\linewidth]{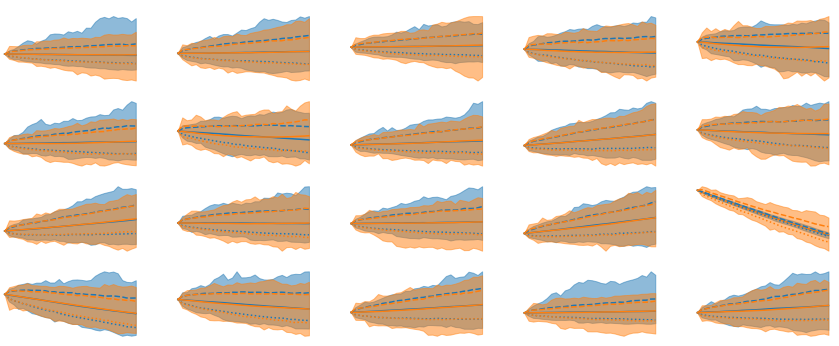}
  \caption{\textbf{Exp. B.} Samples from CEGEN model on 20-dimensional Black-Scholes model-based generation. Envelope of both CEGEN (orange) and Reference (blue) time series. }
   \label{fig:egenbures20samp}
\end{figure*}

\begin{table}[h!]
    \centering
    \footnotesize
    \begin{tabular}{ccccc}
    \hline
    &CEGEN&EWGAN&EDGAN&TSGAN\\\hline\hline
    \multicolumn{5}{c}{Dimension = 4 }\\ \hline 
    q05&7.58e-05&\textbf{2.24e-05}&1.39e-03&3.25e-05\\
    Mean&6.02e-06&1.34e-05&1.12e-05&\textbf{4.21e-06}\\
    q95&4.46e-05&1.38e-04&1.23e-03&\textbf{3.31e-05}\\
    \multicolumn{5}{c}{Dimension = 10 } \\ \hline 
    q05&\textbf{1.63e-04}&5.23e-04&1.86e-03&2.00e-03\\
    Mean&\textbf{6.59e-06}&2.60e-05&1.27e-05&5.12e-04\\
    q95&\textbf{2.80e-04}&9.17e-04&1.91e-03&6.25e-03\\
    \multicolumn{5}{c}{Dimension = 20 } \\ \hline 
    q05&\textbf{1.16e-04}&9.51e-04&3.70e-03&1.19e-04\\
    Mean&\textbf{1.19e-05}&4.88e-05&3.71e-05&1.62e-05\\
    q95&4.35e-04&1.88e-03&2.38e-03&\textbf{4.14e-04}\\
    \end{tabular}
    \caption{\textbf{Exp. B. }Mean squared error (MSE) between reference and generated envelope statistics on a BS case}
    \label{tab:envelopedimensions}
\end{table}

\begin{figure*}[h!]
    \centering
    \includegraphics[width=0.49\linewidth]{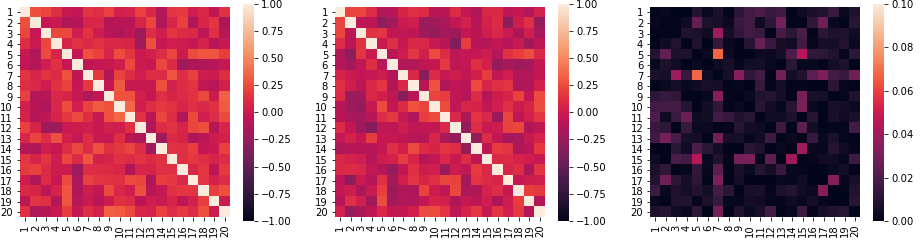}
    \includegraphics[width=0.49\linewidth]{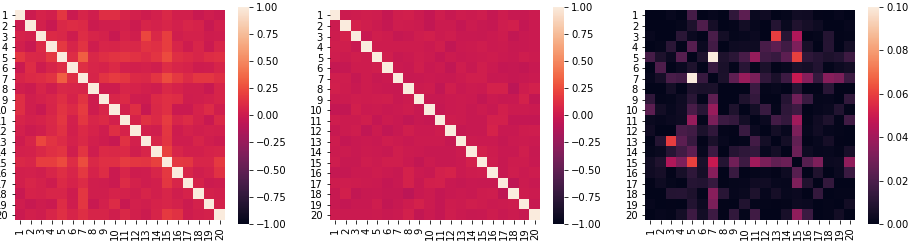}
    \caption{\textbf{Exp. B} Mean over time of empirical correlation matrices illustrates a diagonal covariance matrix case (independence case). First heatmap is generated samples from CEGEN (left), second is Monte Carlo ones (right), the target, the third heatmap (mostly black) represents the mean squared error of the two correlation matrices. }
    \label{fig:correlation20}
\end{figure*}


\subsection{Experiment C - Transfer Learning (Exp. C)}\label{sec:additionalExpC}
We provide in Table \ref{tab:adpatparam} the empirical coefficients of the Ornstein-Ulhenbeck we try to generate. The CEGEN algorithm benefeting of transfer learning gives the closer estimation to the real parameters. However, the CEGEN only trained on the few available samples proposes a better estimation of $\mu$ term. A deeper analysis would be welcome and is the subject of future work.

\begin{table}[h!]
    \centering
    \begin{tabular}{lccc}
    \hline      &Misspecified & CEGEN & CEGEN \\
                &   model    & w/o transfer & with transfer\\\hline\hline
    ${\theta}$ (2.00)   &3.00 & 1.54 &\textbf{1.78}\\
    ${\mu}$ (0.60)      &0.80 & \textbf{0.65} &1.01\\
    ${\sigma}$ (0.15)  &0.10 & -0.03   &\textbf{0.15}\\
    \hline
    \end{tabular}
    \caption{\textbf{Exp. C.} Empirical Ornstein-Uhlenbeck coefficient estimations according to misspecifed samples, CEGEN trained on few original data and CEGEN with transfer learning.}

    \label{tab:adpatparam}
\end{table}


\subsection{Experiment D - Real data and other benchmarks (Exp. D)}\label{sec:additionalExpD}
Table \ref{tab:morebenchmarks} reports discriminative and predictive performances of EWGAN, the conditional recurrent RCGAN \citep{esteban2017real} and an unconditional MMD with Gaussian kernel GMMN \citep{li2015generative}.
\begin{table}[h!]
    \centering
    \footnotesize
    \resizebox{.48\textwidth}{!}{
    \begin{tabular}{l | cc|cc|cc}
    \hline
    &\multicolumn{2}{c|}{EWGAN}&\multicolumn{2}{c|}{RCGAN}&\multicolumn{2}{c}{GMMN}\\
    Data &                  Disc & Pred&Disc & Pred&Disc & Pred\\
    \hline\hline
    Spot prices (d=2)    &.225&.050  &  .427&.809  &  .137&.671\\
    Stocks (d=6)         &.238&.042  &  .287&.616  &  .499&.626\\
    Electric Load (d=12) &.410&.029  &  .495&.581  &  .499&.566\\
    Jena climate (d=15)  &.479&.034  &  .499&.651  &  .295&.634\\
    \hline
    \end{tabular}
    }
    \caption{\textbf{Exp. D. } Discriminative and Predictive scores on real time series  (the lower, the better).}
    \label{tab:morebenchmarks}
\end{table}

\section{Algorithms details}\label{sec:algorithm}
In this section, we detail the pseudo algorithms of both Euler GANS (EWGAN and EDGAN). We also explain more deeply the conditional loss computation, as well as some tested variants.

\begin{algorithm}
\caption{Euler Wasserstein-1 Generative Adversarial Networks (EWGAN).}
\label{alg:ewgan}
\begin{algorithmic}
\STATE {\bfseries Input:} $\theta_0$, $\varphi_0$ randomly chosen, $\alpha$, $\beta$, learning rates,
\STATE $K$ number of iterations,$M$ batch size,
$n_{c}$ critic iterations, $c$ clipping value, $\left(X^{(i)}\right)_{i=1..M}$ real data
\STATE {\bfseries Output:}{$\theta, \varphi$}\\
$\theta \leftarrow \theta_0, \varphi \leftarrow \varphi_0$\;
\WHILE{NOT CONVERGE}
    \FOR{$j=1..n_{\text{critic}}$}
        \STATE $x \leftarrow M$ samples with $X^{(i)} = (X^{(i)}_{t_1},\ldots , X^{(i)}_{t_N})_{i=1..M} $\;
        \STATE $z \leftarrow M$ samples iid gaussian noise
        \STATE $y^\theta \leftarrow M$ generation from Euler scheme and $g_\theta(z)$\;
        \STATE $\varphi \leftarrow \varphi + \alpha \text{Adam}\left( \nabla_\varphi( \mathbb{E}[d_\varphi(x)] - \mathbb{E}[d_\varphi(y^\theta)]), \alpha \right) $
        \STATE $\varphi \leftarrow$ gradient penalty $(\varphi, 10)$\;
    \ENDFOR
    \STATE $x \leftarrow M$ samples with $X^{(i)} = (X^{(i)}_{t_1},\ldots , X^{(i)}_{t_N})_{i=1..M} $\;
    \STATE $z \leftarrow M$ samples iid Gaussian noise\;
    \STATE $Y^\theta \leftarrow M$ generation from Euler scheme and $g_\theta(z)$\;
    \STATE $\theta \leftarrow \theta - \beta \text{Adam}\left( \nabla_\theta \mathbb{E}[d_\varphi(Y^\theta)], \beta\right)$\;
    \ENDWHILE
\end{algorithmic}
\end{algorithm}

\begin{algorithm}
\caption{Euler Dual Generative Adversarial Networks (EDGAN).}
\label{alg:edgan}
\begin{algorithmic}
\STATE {\bfseries Input:} $\theta_0$, $\varphi_0$ randomly chosen, $\alpha, \beta, \gamma$ learning rates,
\STATE $K$ number of iterations, $M$ batch size,
$n_{c}$ critic iterations, $c$ clipping value, $\left(X^{(i)}\right)_{i=1..M}$ real data
\STATE {\bfseries Output:}{$\theta, \varphi, \psi$}\;
$\theta \leftarrow \theta_0, \varphi \leftarrow \varphi_0, \psi \leftarrow \psi_0$,;
\WHILE{NOT CONVERGE}
    \FOR{$j=1..n_{\text{critic}}$}
        \STATE $x \leftarrow M$ samples of $X^{(i)} = (X^{(i)}_{t_1},\ldots , X^{(i)}_{t_N})_{i=1..M} $\;
        \STATE $z \leftarrow M$ samples iid Gaussian noise\;\\
        $y^\theta \leftarrow M$ generations from Euler scheme and $g_\theta(z)$\;
        \;\\
        \STATE $\varphi \leftarrow \varphi + \alpha \text{Adam}\left( \nabla_\varphi( \mathbb{E}[d_\varphi(x)] - \mathbb{E}[d_\varphi(y^\theta)]); \alpha \right) $
        \STATE $\varphi \leftarrow$ gradient penalty $(\varphi, 10)$\;
        \;\\
        \FOR{$t=t_1..t_N$}
            \STATE $\psi \leftarrow \psi + \gamma \text{Adam}\left( \nabla_\psi( \mathbb{E}[d_\psi(x_t)] - \mathbb{E}[d_\psi(y^\theta)_t]); \gamma \right) $
            \STATE $\psi \leftarrow$ gradient penalty $(\psi, 10)$\;
        \ENDFOR
    \ENDFOR
    \STATE $x \leftarrow M$ samples with $X^{(i)} = (X^{(i)}_{t_1},\ldots , X^{(i)}_{t_N})_{i=1..M} $\;
    \STATE $z \leftarrow M$ samples iid Gaussian noise\;
    \STATE $Y^\theta \leftarrow M$ generations from Euler scheme and $g_\theta(z)$\;
    \STATE $\theta \leftarrow \theta - \beta \text{Adam}\left( \nabla_\theta \mathbb{E}[d_\varphi(Y^\theta)], \beta\right)$\;
    \ENDWHILE
\end{algorithmic}
\end{algorithm}

For given loss $\ell$ (Bures-Wasserstein (\ref{eq:newwasserstein}) in the paper), we compute the conditional loss by extracting the elements at a certain date such that the previous state belongs to an ensemble $I$. We propose below two ways to do it.\\
The first approach consists in sorting each dimension at each time step in order to get $K$ quantiles $(a_k)_{k=1..K}$ of both $X_{t_i}$ and $Y_{t_i}$. At date $t_i$ for $i \in\{1,...,T\}$ and for each dimension $d\in\{1,...,D\}$, for a given batch of samples, $\mathcal{L}(X_{t_{i+1}}\,|\,X_{t_{i}}\in I)$ is approximated by selecting only the realizations $x^d_{t_{i+1}}$ such that the previous state $x^d_{t_{i}}$ belongs to the interval $I^d_k = [a^d_k, a^d_{k+1}]$. The losses $\ell^d_k$ between the two conditional distributions are then summed up over all dimensions and subdivisions. To take into account the disjoint support case (for instance samples $x^d_{t_{i}} \in [-1,0[$ and $y^d_{t_{i}} \in ]0,1]$), we penalize by the distance separating the supports. See Algorithm \ref{alg:cdqloss} for further details.\\
Another approach is to compute the partitions of Supp$(X^d_{t})$ before the generator training phase. We use $T$ Kmeans to compute the centers of $K$ clusters at each time step. Then, during the generator training we compute the loss $\ell$ between $X_{t_{i+1}}$ and $Y_{t_{i+1}}$ such that their respective previous states belong to the same cluster $k$. This method has the advantage that the generated samples share the same support as the real data one.
We use the conditional loss by disjoint quantiles in our experiments, because the algorithm runs faster and gives better empirical results.

\begin{algorithm}[h]
  \caption{Conditional Loss by disjoint quantiles.}
  \label{alg:cdqloss}
  \footnotesize
  \begin{algorithmic}
    \STATE {\bfseries Input:} processes of length $T$ $X = (X^1, \ldots, X^D)$, $Y = (Y^1,\ldots,Y^D)$, $\lambda$\\
    \FOR{$t=1...T$}
        \FOR{$d=0...D$}
            \STATE $I^d_{K,x}\leftarrow  K$ subdivisions of Supp$(X^d_{t})$;
            \STATE $I^d_{K,y}\leftarrow  K$ subdivisions of Supp$(Y^d_{t})$;
            \FOR{$k=0...K$}
                \IF{Supp$(X^d_{t})\cup$  Supp$(Y^d_{t}) \neq \emptyset$}
                    \STATE $\ell^d_{t+1, k}\leftarrow {\mathcal{W}}^2_2(\mathcal{L}(X_{t+1}| X^{d}_{t} \in I^d_{K,x}), \mathcal{L}(Y_{t+1}| Y^{d}_{t} \in I^d_{K,y}))$
                \ELSE
                    \STATE $\ell^d_{t+1, k}\leftarrow \lambda |\mathbb{E}[X^{d}_{t}] - \mathbb{E}[Y^{d}_{t}] |$
                \ENDIF
            \ENDFOR
        \ENDFOR
      \ENDFOR
      \STATE $\ell =\sum_{t=1}^{T} \sum_{d=1}^{D} \sum_{k=1}^{K} \ell^d_{t+1,k}$ 
    \STATE {\bfseries Output: $\ell$}
\end{algorithmic}
\end{algorithm}

\FloatBarrier

\section{Models and hyperparameters}\label{sec:hyperparameters}
We use tensorflow to implement neural networks. The networks architecture is composed of 3-layers of 4 times the data dimension neurons each (for stocks 4$\times$6=24 neurons). Euler generator networks are feed-forward, as we want to be Markovian, while benchmarks TSGAN and COTGAN architecture benefit of recurrent networks (GRU, LSTM). Code of TSGAN is available online \href{https://github.com/jsyoon0823/TimeGAN}{(\textbf{link})}, as well as the code of COTGAN \href{https://github.com/tianlinxu312/cot-gan}{(\textbf{link})}.
Other details are precised in Table \ref{tab:hyperparameters}.
Real dataset are normalized with MinMax scaler ((x- min)/(max -min)) and the first date always starts at spot $X_0=0.2$.


\begin{table}[h]
    \centering
    \begin{tabular}{cc}
    \hline
    \multicolumn{2}{c}{Settings of neural networks}\\\hline\hline
    T (ndates) & 30\\
    White noise dim & (T$\times$d)\\
    Optimizer & Adam\\
    Nb iterations & 5000\\
    Batch size & 300\\
    Learning rates & $1.10^{-3}$\\
    \hline
    \end{tabular}
    \caption{Neural network hyper-parameters}
    \label{tab:hyperparameters}
\end{table}

To compute the discriminative and predictive scores, we use the same network architecture and parameters as \citep{yoon2019time} (actually we use their code). The neural networks are 2-layer LSTMs with hidden dimensions 4 times the size of the input features, and use tanh as the activation function and sigmoid as the output layer activation function (such that output belongs to [0,1]).

The training is done on 12 i7-9750H processors at 2.60 GHz.

\section{Results variation}\label{sec:resultvariation}
Table \ref{tab:errorbar} illustrates the variation between three different trainings of each generators for stocks data. We recall that the discriminative and predictive score are obtained by training 10 LSTM networks and averaging their scores (they thus include some additional variation).
\begin{table}[!h]
    \centering
    \begin{tabular}{c cc}
    \hline
    Stocks data & Discriminative & Predictive\\\hline
    EWGAN&  .417($\pm$.041) & .041($\pm$.001)\\
    EDGAN&  .444($\pm$.146) & .041($\pm$.000)\\
    CEGEN&  .077($\pm$.015) & .040($\pm$.000)\\
    TSGAN&  .168($\pm$.025) & .041($\pm$.001)\\
    COTGAN& .094($\pm$.022) & .041($\pm$.000)\\
    \hline
    \end{tabular}
    \caption{Performance variations of each generator for stocks data on discriminative and predictive scores, for three different trainings.}
    \label{tab:errorbar}
\end{table}

\section{Data}\label{sec:data}
\begin{table}[!h]
    \centering
    \begin{tabular}{c ccc}
    \hline
    Dataset       & Sequences & Seq. length & Dim.\\\hline\hline
    Price         & 52608  & 30 & 2\\
    Stocks        & 3600   & 30 & 6\\
    Electric Load* & 50000 & 30 & 13\\
    Jena Climate*       & 50000 & 30 & 15\\
    \hline
    \end{tabular}
    \caption{Data description.}
    \label{tab:data}
\end{table}
Table \ref{tab:data} reports the number of observations of each dataset, the sequence length chosen in our experiments, as well as their dimension.
All datasets are available online, and can be downloaded from: RTE for electric load and price \href{https://www.services-rte.com/fr/telechargez-les-donnees-publiees-par-rte.html?category=generation&type=actual_generations_per_production_type}{(\textbf{link})}, Keras for Jena climate \href{https://storage.googleapis.com/tensorflow/tf-keras-datasets/jena_climate_2009_2016.csv.zip}{(\textbf{link})}. Stocks data source are described in \citep{yoon2019time}.
*For Electric Load and Jena Climate we take only the first 50000 observations.

\end{document}